\def\BState{\State\hskip-\ALG@thistlm}
\newtheorem{example}{Example}
\newtheorem{proposition}{Proposition}
\newcommand{\change}[1]{{#1}}
\title{Model-Agnostic Meta-Learning using Runge-Kutta Methods}
\author[1]{Daniel Jiwoong Im\thanks{imd@janelia.hhmi.org}}
\author[2]{Yibo Jiang}
\author[3]{Nakul Verma}
\affil[1]{Janelia Research Campus, HHMI, Virgina}
\affil[2]{Harvard University, Massachusetts}
\affil[3]{Columbia University, New York}
\date{}
\begin{document}
\maketitle
\vspace{0.8cm}

\begin{abstract}
Meta learning has emerged as an important framework for learning new tasks from just a few examples. 
The success of any meta-learning model depends on (i) its fast adaptation to new tasks, as well as (ii) having a shared representation across similar tasks.
Here we extend the model-agnostic meta-learning (MAML) framework introduced by \citet{Finn2017} to achieve improved performance by analyzing the temporal dynamics of the optimization procedure via the Runge-Kutta method.
This method enables us to gain fine-grained control over the optimization and helps us achieve both the adaptation and representation goals across tasks. By leveraging this refined control, we demonstrate that there are multiple principled ways to update MAML and show that the classic MAML optimization is simply a special case of second order Runge-Kutta method that mainly focuses on fast-adaptation.  
Experiments on benchmark classification, regression and reinforcement learning tasks show that this refined control helps attain improved results. 


\end{abstract}
\section{Introduction}

Building an intelligent system that can learn quickly on a new task with few examples 
or few experiences is one of the central goals of machine learning. Achieving this goal requires an agent that learns continuously while having the ability to adapt to new tasks with limited data.  
Meta-learning \citep{biggs1985} has emerged as a compelling framework that strives to attain this challenging goal. 

There are two 
main approaches to meta-learning: learning-to-optimize and learning-to-initialize the 
meta-model (usually encoded as deep network). Learning-to-optimize refers to having a model that encodes the learning algorithm 
and predicts the direction of the parameter updates \citep{Hochreiter2001}. Learning-to-initialize refers to learning a representation that can quickly adapts 
to solve multiple tasks \citep{Vinyals2016, Ravi2017, Finn2017}. Here we focus on understanding and improving the 
latter approach, which is found a wide range of applications \citep{Li2017, Nichol2018, Antoniou2019}.

As one expects, over-fitting makes learning any new task from a few examples very difficult.
Meta-learning overcomes the problem of scarcity by 
jointly learning from a collection of related tasks,
each task corresponds to a learning problem on a different (but related) dataset.

Model-Agnostic Meta-Learning (MAML) has emerged as a popular state-of-the-art model in this framework \citep{Finn2017}. It is a gradient based optimization model that learns the meta-parameters (that help to generalize to new tasks) in two update phases:
 fast adaptation (inner-updates) and meta-updates 
(outer-updates). Roughly, the inner-updates optimize the parameters to maximize 
the performance on new tasks using few examples, and the outer-updates optimize the 
meta-parameters to find an effective initialization within the few parameter updates.

%


Finding a model that can yield good prediction accuracy within a few updates requires (i) fast-adaptation -- that is, finding model parameters that can either quickly change the 
internal representation (so as to maximize the sensitivity of the loss function of the new task), and/or (ii) shared-representation -- that is, developing a high quality joint latent feature representations (so as to maximize the mutual information between
different tasks).
Motivated by this, we propose new learning-dynamics for MAML optimization that gives better flexibility and improves the model on both these fronts.
Specifically, we apply the class of Runge-Kutta method to MAML optimization, which can take advantage of computing 
the gradients 
multiple-steps ahead when updating the meta-model. 
This allows us to generalize MAML to using higher-order gradients. Furthermore, 
we show that the current update rule of MAML corresponds to a specific type of second-order explicit 
Runge-Kutta method called {\em the midpoint method} \citep{Hairer1987}.

The main contribution of this work is as follows.
\begin{enumerate}[(i)]
    \item We propose a novel Runge-Kutta method for MAML optimization. This new viewpoint is advantageous as it helps get a more refined control over the optimization. (Section~\ref{sec:advect_maml})
    \item By leveraging this refined control, we demonstrate that there are multiple principled ways to update MAML and show that original MAML update rule corresponds to one of the class of second-order Runge-Kutta methods. (Section~\ref{sec:mamlrk2})
    \item The Runge-Kutta framework helps understand the MAML learning dynamics from the lens of explicit ODE integrators. To the best of our knowledge, this is the first work that successfully applies ODE solvers to 
meta-learning. (Section~\ref{sec:maml-rk})

\item We show that the refinement obtained by the Runge-Kutta method is empirically effective as well. We obtain significant improvements in performance benchmark classification, regression, and reinforcement learning tasks.
(Section~\ref{sec:experiments})
%
\end{enumerate}

\section{Model-agnostic Meta-learning (MAML)}
Model-Agnostic Meta-Learning (MAML) was introduced by \citet{Finn2017} with the aim 
to 
train a model that can adapt to a 
large set of new tasks with only a few data points in a few learning iterations. 
Given a meta-model $f$ parameterized by meta-parameters $\theta$, one wants to find a gradient learning rule can make rapid progress on a new task.
This can be formalized as follows:
for a task $\mathcal{T}_i$ drawn from a distribution of tasks $p(\mathcal{T})$, we require that
small changes in parameters using gradient based parameter updates, that is (for any given learning rate $h$)
\begin{align}
    \theta^\prime &= \theta - h \nabla_\theta \mathcal{L}_{\mathcal{T}_i}(f(\theta)),
    \label{eqn:basic_req}
\end{align}
would result in large improvement on loss function $\mathcal{L}$ of a task $\mathcal{T}_i$.
This requirement implies that meta-parameter $\theta$ is an initialization that produces
a more transferable internal representation and achieves good prediction with only a few examples from a new task. 

MAML finds such a meta-parameter by simultaneously minimizing loss functions associated with each task. 
The MAML meta-objective is defined as \citep{Finn2017}
\begin{align}
    \min_{\theta} \mathcal{L} (f(\theta^\prime)) & = 
    \min_{\theta} \sum_{\mathcal{T}_i \sim p(\mathcal{T})} \mathcal{L}_{\mathcal{T}_i}(f(\theta - h \nabla_\theta \mathcal{L} _{\mathcal{T}_i}(f(\theta)))),
    \label{eqn:maml_objective}
\end{align}
where $\theta'$ is as per Eq.\ \eqref{eqn:basic_req}, and the total loss $\mathcal{L} (f(\theta^\prime))$ is simply the aggregate of the individual task specific losses $ \sum_{\mathcal{T}_i \sim p(\mathcal{T})} \mathcal{L} _{\mathcal{T}_i}(f(\theta^\prime))$.
Thus, the meta-parameter gets updated by taking a gradient descent step in the direction that minimizes loss for all the given tasks.

Algorithm~\ref{algo:maml} provides an overview of the MAML training procedure.
A batch of tasks is sampled from the task distribution $p(\mathcal{T})$. The model parameters are then updated for each task (inner-updates). These updated parameters are then used to update the meta-parameter (outer-update).
%

For simplicity and ease of subsequent discussion, we will hide the model $f$ inside the loss $\mathcal{L}$ and refer to it as $\mathcal{L}(\theta^\prime)$ henceforth. That is, ${ \mathcal{L}(\theta^\prime) := \mathcal{L}_{\mathcal{T}_i}(\theta - h \nabla_\theta \mathcal{L} _{\mathcal{T}_i}(\theta))}$.

\begin{figure*}[t]
    \centering
    \begin{minipage}{0.493\textwidth}
        \begin{algorithm}[H]
        \caption{MAML}\label{algo:maml}
        \begin{algorithmic}[1]
        \Require{$\alpha$ and $\beta$ are learning rates}
        \Require{$p(\mathcal{T})$ is distribution over tasks}
        \State Randomly initialize meta-parameter $\theta=\theta_0$ 
        \While {not done}:
        \State Sample batch of tasks $\mathcal{T}_i \sim p(\mathcal{T})$ 
        \For {all $\mathcal{T}_i$} 
        \State Evaluate $\nabla_\theta \mathcal{L} _{\mathcal{T}_i}$ 
        \State Update model parameter: 
        \State $\theta^{\prime} = \theta - \alpha \nabla_\theta \mathcal{L}_{\mathcal{T}_i}(f(\theta))$
        \EndFor
        \State Update meta-parameter:
        \State $\theta = \theta - \beta \nabla_\theta \sum_{\mathcal{T}_i\in p(\mathcal{T})} \mathcal{L}_{\mathcal{T}_i}(f(\theta^\prime))$
        \EndWhile
        \end{algorithmic}
        \end{algorithm}
    \end{minipage}
    \begin{minipage}{0.493\textwidth}
        \begin{algorithm}[H]
        \caption{MAML-RK2}\label{algo:gmaml2}
        \begin{algorithmic}[1]
        \Require{$\alpha$ and $\beta$ are learning rates}
        \Require{$p(\mathcal{T})$ is distribution over tasks}
        \State Randomly initialize meta-parameter $\theta=\theta_0$ 
        \While {not done}:
        \State Sample batch of tasks $\mathcal{T}_i \sim p(\mathcal{T})$ 
        \For {all $\mathcal{T}_i$} 
        \State Evaluate $\nabla_\theta \mathcal{L} _{\mathcal{T}_i}$ 
        \State Update model parameter: 
        \State $\theta^{\prime} = \theta -  \textcolor{red}{ h q_{21}} \nabla_\theta \mathcal{L}_{\mathcal{T}_i}(f(\theta))$
        \EndFor
        \State Update meta-parameter:
        \State  $\theta = \theta - \textcolor{red}{h} \sum_{\mathcal{T}_i\in p(\mathcal{T})} \textcolor{red}{ (a_1 \nabla_{\theta} \mathcal{L}_{\mathcal{T}_i}(f(\theta)))} + 
        \textcolor{red}{ a_2\nabla_\theta \mathcal{L}_{\mathcal{T}_i}(f(\theta^\prime)))}$
        \EndWhile
        \end{algorithmic}
        \end{algorithm}
    \end{minipage}        
\end{figure*}


\section{Generalizing MAML using the Runge-Kutta method}
\label{sec:maml-rk}

One of the central objectives of this paper is understanding the learning dynamics of MAML
through the lens of explicit ODE integrators. 
Consider the vector field $\mathcal{X}$ that maps the space of parameters to the descent 
directions induced by the gradient of the MAML objective. So long as $\mathcal{X}$ is sufficiently smooth, we can look for solutions of the form 
\[
    \frac{d\theta}{dt} = \mathcal{X}(\theta).
\]
The temporal dynamics of this ODE constitutes the ideal path that the meta-learner takes during training given 
an initial parameter $\theta_0$. We can therefore use a numerical ODE solver to solve approximate this ideal solution. 
Clearly a higher order explicit ODE integrator can be viewed as a black 
box that transforms $\mathcal{X}$ into a new vector field $\mathcal{\bar X}$ such that
the parameters $\theta$ evolve more closely to the streamlines of $\mathcal{X}$.
For gradient descent, $\mathcal{X} = \nabla_{\theta} \mathcal{L} $ and the resulting 
$\mathcal{\bar X}$ advects $\theta$ more closely along the path of the steepest descent.
One of the central goals of this work is to precisely to discern the efficacy of applying integrators on
$\mathcal{\bar X}$ rather than $\mathcal{X}$. To do this we will replace the 
gradient $\mathcal{X}(\theta_t) = \nabla_\theta \mathcal{L}(\theta_t)$ with the appropriate 
$\mathcal{\bar X}(\theta_t)$ by calling a chosen explicit ODE integrator. 
\change{This viewpoint generalizes the MAML optimization framework to optimize with respect to temporal parameters in contrast to considering the spatial parameters as done in the previous literature} \citep{Park2019, Chen2018, Song2019}. 
%
%
%

For a given timestep $t$, explicit integrator can be seen as a morphism over vector fields 
$\mathcal{X} \rightarrow \mathcal{\bar{X}}^h$ \change{(for a fixed stepsize $h$)}. 
Hence, for a true gradient $g_t= \nabla_\theta \mathcal{L}(\change{\theta_t})$ \change{(at time $t$)}, we solve the modified Runge-Kutta 
gradient $\bar{g_t} = \nabla_\theta \mathcal{L}(\change{\theta^\prime_t})$ as follows. 
Define
\begin{align}
    \texttt{advect}_{g_t}^{\textup{RK}}(\theta,h):= \theta_t + \bar{g_t}h = \theta_{t+1} 
    \nonumber
\end{align}
The general form of $\texttt{advect}_{g_t}^{\textup{RK}}(\theta,h)$ is the Runge-Kutta (RK) equation \change{of order $N$} \citep{Butcher2008}, given by 
\allowdisplaybreaks
\begin{align}
    \theta_{t+\change{1}} &= \theta_t + h\sum^{N}_{i=1} a_i k_i, \hspace{0.35in} \textrm{where}
\end{align}
\begin{align}
    k_1 :=& \nabla_{\theta} \mathcal{L}(t, \theta_t), \nonumber\\
    k_2 :=& \nabla_{\theta} \mathcal{L}(t + p_2h, \theta_t + q_{21} k_1 h),\nonumber\\
    k_3 :=& \nabla_{\theta} \mathcal{L}(t + p_3h, \theta_t + q_{31} k_1 h + q_{32} k_2 h),\nonumber\\
     & \vdots \nonumber\\
    k_N :=& \nabla_{\theta} \mathcal{L}(t + p_nh, \theta_t + q_{n1} k_1 h + q_{n2} k_2 h 
        + \cdots + q_{n,n-1} k_n h),
    \label{eqn:general_rk}
\end{align}
where (i) $a_i$ are combination weights (that should sum to $1$), 
 (ii) $p_j$ are the so called \emph{nodes} that scale the timestep (for better numerical approximation), (iii) $q_{ij}$ are the coefficients that scale the step towards the gradient $k_i$ (for a fine grain optimization control), and (iv) $\sum^{i-1}_{j=1} q_{ij}=p_j$ for all $j=2,\ldots,N$.
\change{The specific choice of these parameters ($a_i$, $p_j$ and $q_{ij}$) gives rise to various popular instantiations of the RK optimization method}.

Figure~\ref{fig:rk_slopes} demonstrates the slopes $k_i$ for a quadratic function (depicted in blue color) for $1\leq i\leq N=4$.
Notice that it takes a linear combination of the $k_i$'s (red arrows) to calculate the next step vector (green arrow). These $k_i$'s can be thought as \emph{forward multi-steps} since they compute the gradients at future timesteps (c.f.\ Figure \ref{fig:rk_slopes}).
Rearranging the terms with respect to $\bar{g_t}$, we get
\begin{align}
    \bar{g_t}^{\text{MAML-RK}}  :=& \frac{\texttt{advect}_{g_t}^{\textup{RK}}(\theta,h) - \theta_t}{h} 
                = \frac{\theta_t+ h\sum^{N}_{i=1} a_i k_i - \theta_t}{h}
                =
                \sum^{N}_{i=1} a_i k_i. \nonumber
\end{align}
We can use this refined Runge-Kutta gradient in the MAML optimization for better convergence and will call it as the {\textbf{generalized Runge-Kutta MAML}} (MAML-RK) method.
%
Observe that $\bar{g}_t^{\text{MAML-RK}}$ is a linear combination of the gradients of the forward multi-steps $k_i$
over the sum of individual task specific loss functions $\mathcal{L}_{\mathcal{T}_i}(t,\theta_t)$. 
Unlike the standard MAML update, which takes the gradient only one step forward (inner-update)\footnote{The meta 
optimization is performed over the meta-parameter $\theta$ while the objective is computed using the updated 
model parameters $\theta^\prime$.}, MAML-RK has the ability to take the gradient multiple steps ahead. This 
encourages the meta-learner to find parameters that adapt to new tasks with very few (sometimes even just one) gradient 
steps on a new task, just as what we desire. Note that although we advect $g_t$ with RK in this work, any other other explicit 
ODE solver can also be used. 

Figure~\ref{fig:gmaml_ill} illustrates of the optimization path of MAML-RK. 
The bold blue path represents the continuous learning dynamics of the meta-parameter $\mathcal{X}(\theta)$.
The three parameters $\theta_1$, $\theta_2$, and $\theta_3$ are the optimal parameters
for three tasks, and the best meta-parameter $\theta^*$ is chosen in such a way that it lies close to all of them.
The path of standard MAML optimization (the green curve) deviates from $\mathcal{X}(\theta)$ due local crude approximations. 
Using higher-order explicit RK methods (the red curve), follows the ideal path better due to better quality approximation.

Next we show that the original MAML optimization is a special case of MAML-RK \change{(with a specific setting of $a_i$, $p_j$, $q_{ij}$)} and also explore a wider setting of these parameters giving us other types of
first and second-order optimizations for MAML.

\begin{figure*}[t]
    \centering
    \begin{minipage}{0.47\textwidth}
        \includegraphics[width=\linewidth]{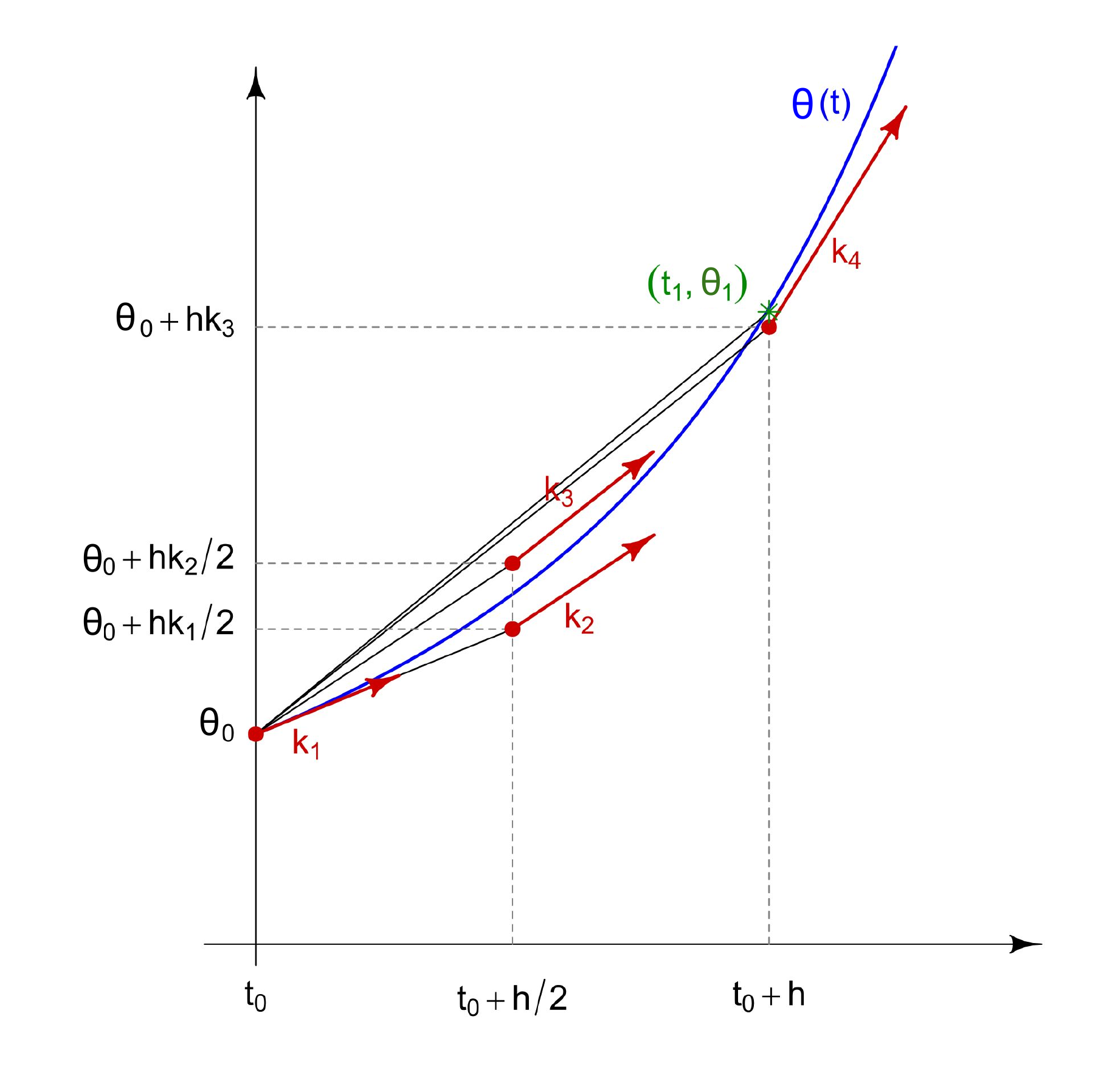} 
        \vspace{-0.8cm}
        \caption{Runge-Kutta method slope $k_i$ illustrations.\protect\footnotemark
        Update direction from $\theta_0$ to $\theta_1$ is composed of linearly combining different $k_i$s.
        }
        \label{fig:rk_slopes}
    \end{minipage}
    \begin{minipage}{0.52\textwidth}
        \includegraphics[width=\linewidth]{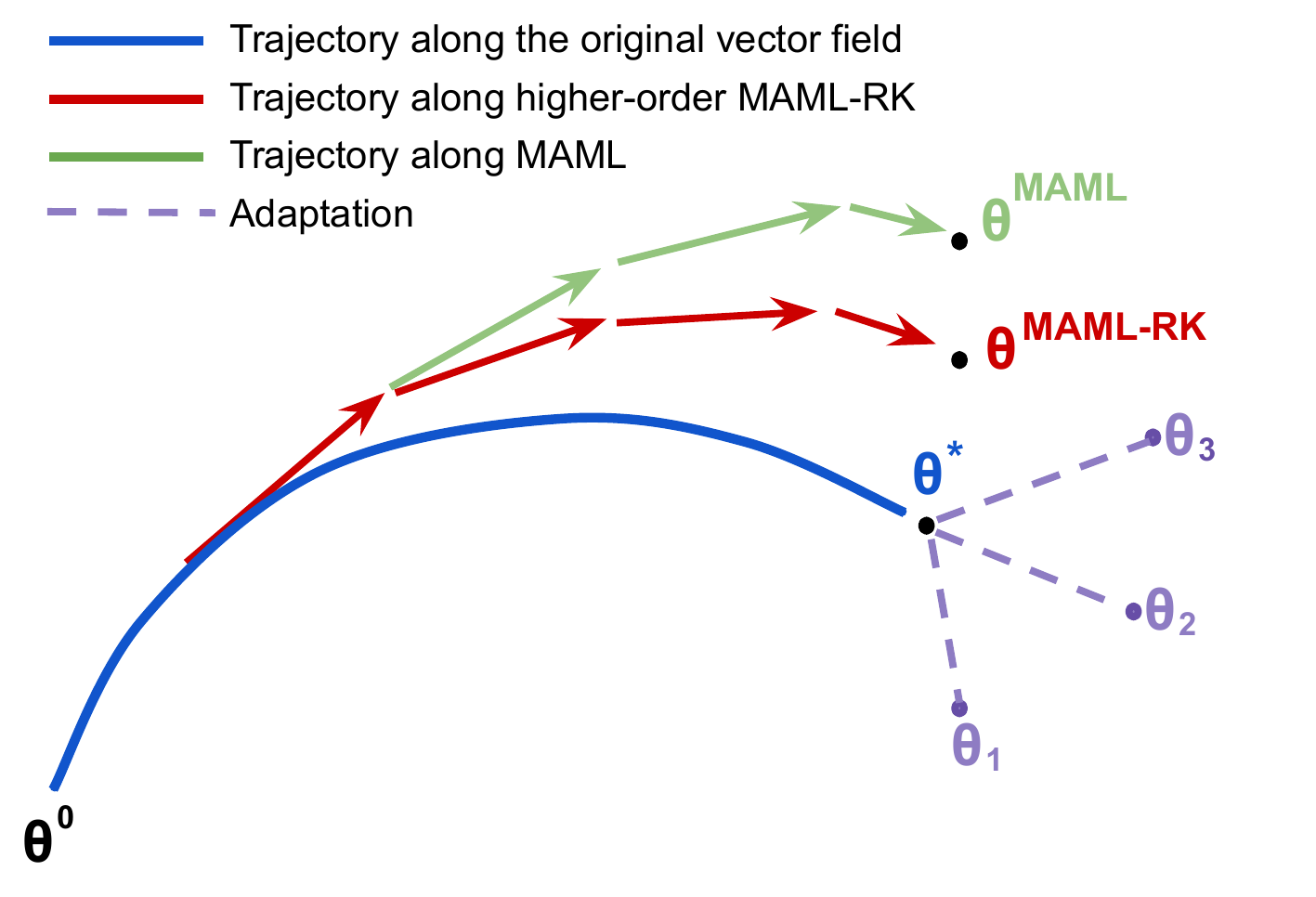}        
        \caption{Diagram of MAML-RK illustrates how parameters $\theta$ evolve over time.
        The higher-order MAML-RK takes the path that is closer to continuous path $\mathcal{X}$ (bold red curve)
        that is closer to best parameter initialization that can quickly adapts to multiple tasks.}
        \label{fig:gmaml_ill}
    \end{minipage}
\end{figure*}
\footnotetext{The original image is from Runge-Kutta Wikipedia: \texttt{en.wikipedia.org/wiki/File:Runge\-Kutta\_slopes.svg}}

\subsection{Advect MAML's gradient}
\label{sec:advect_maml}
The technical key component of MAML is that it computes the gradient with respect to the meta-parameter $\theta$ 
while computing the objective on the updated model parameter $\theta^\prime$ (c.f.\ Equation~\ref{eqn:maml_objective} and Algorithm \ref{algo:maml}). 
This makes the meta-parameter to move towards the direction of $\nabla_\theta \sum_{\mathcal{T}_i\sim p(\mathcal{T})} \mathcal{L}_{\mathcal{T}_i} (\theta_i^\prime)$.
Observe that this precisely corresponds to the $k_2$ component of MAML-RK with a specific setting of $p_2$ and $q_{21}$.
The following proposition tells us that MAML is a special case of {\em second-order} MAML-RK.
\begin{proposition}
    The MAML's gradient corresponds to the second-order explicit Runge-Kutta equation with
    the parameters $a_1=0$, $a_2=1$, $q_{21}=\frac{1}{2}$, $p_2=\frac{1}{2}$. 
    \label{prop:maml_midpt}
\end{proposition}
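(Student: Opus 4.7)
The plan is to verify the claim by direct substitution of the specified parameters into the general Runge-Kutta formula (Eq.~\eqref{eqn:general_rk} with $N=2$) and matching the resulting update term-by-term against MAML's outer update in Algorithm~\ref{algo:maml}.

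First, I would instantiate the RK equation for $N=2$ with the claimed parameters. Taking $a_1=0,\ a_2=1,\ q_{21}=\tfrac{1}{2},\ p_2=\tfrac{1}{2}$, the two slopes become $k_1=\nabla_\theta \mathcal{L}(\theta_t)$ and $k_2=\nabla_\theta \mathcal{L}\bigl(\theta_t+\tfrac{h}{2}\,k_1\bigr)$, and the one-step update collapses to
\begin{equation*}
\theta_{t+1}=\theta_t + h\bigl(0\cdot k_1 + 1\cdot k_2\bigr)=\theta_t + h\,\nabla_\theta \mathcal{L}\!\left(\theta_t + \tfrac{h}{2}\nabla_\theta \mathcal{L}(\theta_t)\right),
\end{equation*}
which is precisely the classical \emph{midpoint} scheme.

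Next, I would line this up against the MAML outer-update. Per-task, MAML performs $\theta\leftarrow \theta - \beta\,\nabla_\theta \mathcal{L}_{\mathcal{T}_i}\!\bigl(\theta - \alpha\,\nabla_\theta \mathcal{L}_{\mathcal{T}_i}(\theta)\bigr)$. Absorbing the descent sign into $\mathcal{X}$ (as done in Section~\ref{sec:maml-rk} when writing the RK advection), and identifying the outer step size with $h=\beta$ and the inner step size with $hq_{21}=\alpha$, the MAML update and the specialized RK update coincide term-by-term. The identification $q_{21}=\tfrac{1}{2}$ simply encodes the relationship $\alpha=\beta/2$ between the two MAML learning rates under the single stepsize $h$ of the ODE integrator (matching the inner step in Algorithm~\ref{algo:gmaml2}). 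This is exactly the substitution highlighted in Section~\ref{sec:advect_maml}: MAML's meta-gradient corresponds to the $k_2$ component alone.

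Finally, I would check that the chosen parameters satisfy the RK consistency constraint stated after Eq.~\eqref{eqn:general_rk}, namely $\sum_{j=1}^{i-1}q_{ij}=p_i$. For $i=2$ this reads $q_{21}=p_2$, and both equal $\tfrac{1}{2}$, so the scheme is a \emph{valid} second-order explicit RK method (not merely a formal template). There is no real obstacle here; the whole proof is a direct pattern-match, and the only subtlety worth flagging in the write-up is being explicit about the sign convention on $\mathcal{X}$ and about how the pair $(\alpha,\beta)$ in Algorithm~\ref{algo:maml} gets reparameterized as $(hq_{21},h)$ so that the equivalence is manifest.
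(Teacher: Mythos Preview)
Your proposal is correct but takes a different route from the paper. You argue by \emph{direct substitution}: plug $a_1=0$, $a_2=1$, $q_{21}=p_2=\tfrac12$ into the RK2 template, obtain the midpoint scheme, and then identify $(\alpha,\beta)=(hq_{21},h)$ so that the resulting update matches the MAML outer step line-by-line. The paper instead proceeds via \emph{Taylor expansion}: it expands the MAML meta-gradient $\nabla\mathcal{L}(t+h,\theta_t-h\nabla\mathcal{L})$ to first order in $h$, separately expands the generic RK2 increment $a_1k_1+a_2k_2$, and then reads off the parameter values by matching the coefficients of $\nabla\mathcal{L}$, $\tfrac{d\nabla\mathcal{L}}{dt}$, and $\tfrac{\partial\nabla\mathcal{L}}{\partial\theta}\nabla\mathcal{L}$.

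What each buys: your argument is shorter, avoids the bookkeeping of two simultaneous expansions, and is explicit about the sign convention and the $(\alpha,\beta)\leftrightarrow(hq_{21},h)$ reparameterization---points the paper's proof leaves implicit. The paper's route, by contrast, derives the RK2 order conditions $a_1+a_2=1$, $a_2q_{21}=\tfrac12$, $a_2p_2=\tfrac12$ as a by-product and makes the $\mathcal{O}(h^2)$ truncation structure visible, which is what justifies the later remark about $\mathcal{O}(h^3)$ per-step error and underpins Table~\ref{tab:rk2_methods}. Your consistency check $q_{21}=p_2$ confirms the scheme is a valid RK tableau but does not by itself certify second-order accuracy; if you want to match the paper's claim about order, you would still need a one-line appeal to the standard RK2 order conditions.
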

\begin{proof}
For simplicity, let us denote $\nabla_\theta \mathcal{L}(t,\theta_t)$ as $\nabla \mathcal{L}(t,\theta_t)$.
Consider the Taylor expansion of MAML's gradient:
\allowdisplaybreaks
\begin{align}
    \frac{d\theta}{dt}
    &= \nabla \mathcal{L} \left(t+h, \theta_t - h \nabla \mathcal{L} (t,\theta_t)\right)\nonumber\\
    &= \nabla \mathcal{L} \left(t, \theta_t\right) + h \Bigg(\frac{d\nabla\mathcal{L}(t, \theta_t)}{dt}
     -\frac{\partial\nabla \mathcal{L}(t, \theta_t)}{\partial \theta}\nabla \mathcal{L}(t, \theta_t)  \Bigg) + \mathcal{O}(h^2)
    \label{eqn:maml_taylor}.
\end{align}

We can compare Equation~\ref{eqn:maml_taylor} with the second-order explicit Runge-Kutta Equation for MAML, which is 
\begin{align}
    \theta_{t+h} &= \theta_t + (a_1\nabla \mathcal{L}(t, \theta_t)
       + a_2\nabla \mathcal{L}(t + p_2h, x_t + q_{21} k_1 h)) h\nonumber \\
    &= \theta_t + (a_1+a_2) \nabla\mathcal{L}(t, \theta_t)  + a_2 h \Bigg(\frac{d\nabla \mathcal{L}(t, \theta_t)}{dt}p_2  + \frac{\partial\nabla \mathcal{L}(t, \theta_t)}{\partial \theta}\nabla \mathcal{L}(t, \theta_t) q_{21}\Bigg) + \mathcal{O}(h^2), \label{eqn:second_order_rk}
\end{align}
such that $a_1 + a_2 = 1, a_2 q_{21} = \frac{1}{2}, a_2 p_1 = \frac{1}{2}$.

We can now see that Equation~\ref{eqn:second_order_rk} equals Equation~\ref{eqn:maml_taylor} with $a_1=0$, $a_2=1$, $p_2=\frac{1}{2}$, and $q_{21}=\frac{1}{2}$.
%
\end{proof}

The optimization done with the specific setting of the RK parameters as shown in Proposition \ref{prop:maml_midpt} is  usually called the midpoint method \citep{Butcher2008}. This shows that the original MAML objective is essentially a midpoint optimization method.

A setting of $a_1=0$ and $a_2=1$ implies that the classic MAML objective solely relies on $\nabla \mathcal{L}_{\theta}(t + p_2h, x_t + q_{21} k_1 h)$.
Moreover, a setting of $q_{21} = \frac{1}{2}$ corresponds to a learning rate of inner-update (i.e.\ $\alpha$ in Algorithm \ref{algo:maml}) to be $\frac{1}{2}h$ and 
the learning rate of outer meta-update (i.e.\ $\beta$ in Algorithm \ref{algo:maml}) to be $h$. This results in a  meta-model optimization to stays as close to the ideal trajectory  
path $\mathcal{X}(\theta)$ with an error rate of $\mathcal{O}(h^3)$ per step (c.f.\ proof of Proposition \ref{prop:maml_midpt}).

\subsection{Examples beyond MAML}
\label{sec:mamlrk2}

The Runge-Kutta gradient $\bar{g}_t^{\textup{MAML-RK}}$ discussed so far has been generic. 
One can instantiate it (i) at various degrees of order (by choosing $N$), and (ii) by varying the RK parameters in 
Equation~\ref{eqn:general_rk}. In this section, we examine some example instantiations of MAML-RK.

\begin{example}[First-order MAML]
Choose $N=1$ and $a_1=1$. Then,
\begin{equation}
    \bar{g}_t^{\textup{MAML-RK1}} = h \nabla_{\theta} \sum_{\mathcal{T}_i \sim p(\mathcal{T})} \mathcal{L}_{\mathcal{T}_i}(t, \theta_t).
\end{equation}
\end{example}

\begin{wraptable}{r}{0.5\textwidth}
   \centering
   \captionof{table}{The coefficients of various $2^{\textup{nd}}$-order RK methods \citep{Hairer1987}}
   \label{tab:rk2_methods}
   \begin{tabular}{| c | c | c | c |}
       \hline
       Methods & $a_1$ & $a_2$ & $q_{21},p_2$ \\[1ex]
       \hline\hline
       \rule[1.1ex]{0pt}{1ex}
       Midpoint & $0$ & $1$ & $\frac{1}{2}$ \\[.1ex]
       \hline
       \rule[1.1ex]{0pt}{1ex}
       Heun& $\frac{1}{2}$ & $\frac{1}{2}$ & $1$ \\[.1ex]
       \hline
       \rule[1.1ex]{0pt}{1ex}
       Ralston& $\frac{1}{3}$ & $\frac{2}{3}$ & $\frac{3}{4}$\\[.1ex]
       \hline
       \rule[1.1ex]{0pt}{1ex}
       ITB & $\frac{2}{3}$ & $\frac{1}{3}$ & $\frac{3}{2}$ \\[.1ex]
       \hline
       Generic & $\frac{1}{2x}$ & $1-\frac{1}{2x}$ & $x$ \\[.1ex]
       \hline
   \end{tabular}
\end{wraptable}
The first-order MAML is simply the Euler's method. It sums over 
the gradients of the  loss functions for every task. It is similar in flavor to other first-order approaches, such as FOMAML \citep{Biswas2018} and 
Reptile \citep{Nichol2018}. 

In the previous section, we showed that MAML is a special case of second-order MAML-RK by
advecting $g_t$ with MAML's gradient. Here, we illustrate several other popular second-order
Runge-Kutta methods that we can also apply to MAML gradients, thus extending the variety of optimization techniques that are currently in use for meta-learning.

\begin{example}[Second-order MAMLs]
Choose $N=2$. The methods in Table~\ref{tab:rk2_methods} satisfy the constraints $a_1+a_2=1$, $a_2q_{21}=\frac{1}{2}$, 
and $a_2p_2=\frac{1}{2}$. Thus
\begin{align}
    \bar{g}_t^{\textup{MAML-RK2}}& = a_1\nabla_{\theta} \mathcal{L}(t, \theta_t)
        + a_2\nabla_{\theta} \mathcal{L}\left(t + p_2h, (\theta_t + \nabla_{\theta} \mathcal{L}(t, \theta_t)) q_{21} h\right),
\end{align}
where the parameters $a_1$, $a_2$, $p_2$, and $q_{21}$ can be substituted accordingly.
\end{example}
We can thus generalize Algorithm \ref{algo:maml} by substituting the learning rate $\alpha$ as $q_{21}h$, and $\beta$ as $h$.
Note that $q_{21} = p_2$ for all second-order Runge-Kutta methods. 
Algorithm~\ref{algo:gmaml2} presents generalized MAML-RK2 training algorithm.
It is worth noting that the existing literature only discusses  the midpoint method for training MAML, and our extension enables the practitioner to explore 
other methods, like Heun's, Ralston, and ITB \cite{Hairer1987}.

\subsection{Additional Remarks}
Using the gradient ($\bar{g}^{\textup{MAML-RK1}}=\nabla \mathcal{L}(\theta)$) from a pre-trained model that uses a large dataset and then fine-tuning it on a smaller new dataset is popular in transfer learning and has become a popular techniques in various application domains. For example in computer vision, it is common to use a parameters and gradients of a pre-trained  network on ImageNet \citep{Deng2009} and use it to perform, say, bird classification \citep{Zhang2014}.
Hence, there is some evidence that
pre-training with first order gradient, i.e.\ $\bar{g}^{\textup{MAML-RK1}}$ helps the model learn a shared feature 
representation that can be applied across similar tasks. Although it is worth noting that it does not
encourage the model to learn a meta-parameter that can rapidly adapt to a new task. 
In contrast, part of the success of the classic MAML optimization 
 $\bar{g}^{\textup{MAML}}$ (a specific second order method) is because it can directly optimize for this rapid new task adaptation by differentiating 
through the fine-tuning process with respect to the meta-parameter 
$\nabla \mathcal{L}(\theta^\prime)$, but it does not make use of the first order gradient 
$\nabla \mathcal{L}(\theta)$. 
it is important to note our second order Runge-Kutta generalization $\bar{g}^{\textup{MAML-RK2}}$ (with specific instantiations as Heun's, Ralston, and ITB), considers
both the terms: $\nabla \mathcal{L}(\theta)$ and $\nabla \mathcal{L}(\theta^\prime)$, and thus have the potential benefit
of encouraging both rapid adaptation and shared feature representation \citep{Raghu2019}. See our experiments in  Section~\ref{sec:experiments}.

For every meta-learning update, the first-order method performs one evaluation of $\mathcal{L}$,
 and the second-order method performs two evaluations for RK methods. The number of evaluations grows linearly 
up to the fourth-order, after which it grows faster making it computationally prohibitive.
The fourth-order Runge-Kutta method is often the 
popular method for solving initial value ODE problems. In our case, even the second-order 
optimization requires Hessian-vector products during the MAML updates and more evaluations is impractical. 
We will therefore limit to second order RK methods in  
our experiments. 

\section{Related Work}

\begin{table}[b]
  \centering
  \caption{The performance of MAML-RK for sinunoid regression tasks on 
    10-shot adaptation problem. MAML-RK1 (the first-order method) corresponds to
    standard pre-training model on all training tasks. MAML-RK2 (midpoint)
    corresponds to stanard MAML method. }
  \label{tab:regression_results}
  \begin{tabular}{|c|c|c|}\hline
      \multicolumn{2}{|c|}{Sinuoid} & 10-shot \\\hline\hline
            & MAML-RK1 (pretrained) & 2.72 $\pm$ 0.174\\
            & MAML-RK2 (midpoint)     & 0.13 $\pm$ 0.038\\\hline
      \parbox[t]{2mm}{\multirow{3}{*}{\rotatebox[origin=c]{90}{ours}}} 
            & MAML-RK2 (Heun's)     & 0.19 $\pm$ 0.053\\
            & MAML-RK2 (Ralston)    & {\bf 0.12 $\pm$ 0.039}\\
            & MAML-RK2 (ITB)        & 0.18 $\pm$ 0.054\\\hline
  \end{tabular}
    \vspace{-0.25cm}
\end{table}

Early approaches to meta-learning goes back to the late 1980s and early 1990s \citep{Schmidhuber1987, Bengio1991}
where it studies evolutionary principles in self-referential learning using
genetic programming. There has been a recent surge in interest where meta-models are applied to a wide array of tasks from architecture search \citep{Zoph2016} and hyperparameter search \citep{Maclaurin2015},
to learning optimization \citep{Chen2017}. It has become a popular approach in 
few-shot supervised learning \citep{Hariharan2016} and fast reinforcement learning \citep{Wang2017}.

Among many approaches to meta-learning, \citet{Finn2017} proposed the MAML framework that uses a meta-objective
function that performs a two-step gradient-based optimization of the meta-parameters for fast task-specific optimization. 
Various studies \citep{Li2017, Antoniou2019} show that a fast-adaptation 
update rule can heavily influence performance, which has initiated several related investigations.
\citet{Biswas2018} and \citet{Nichol2018} use first-order update methods to reduce computational burden, while
\citet{Park2019}, \citet{Chen2018} and \citet{Song2019}
leverage second-order curvature information for better generalization.
Instead of analyzing the spatial features (such as gradient and curvature of the parameter space), our work focuses on the temporal dynamics of the optimization. 

Recent work by \citet{Raghu2019} analyzes the source of effectiveness of MAML -- is it primary due to {\em rapid learning}, or due to {\em feature reuse}? Their analysis shows MAML partly achieves both -- large portion of the lower layers of the MAML model helps in feature reuse and 
large portion of the upper layers helps in rapid learning. Part of the goal of our work is to think about aspects of a meta-model more effectively. Our RK extension makes 
fast-adaptation and shared-representation more explicit giving practitioners a fine grain control over the optimization.


\begin{figure*}[t]
    \centering
    \begin{minipage}{0.24\textwidth}
        \includegraphics[width=\linewidth]{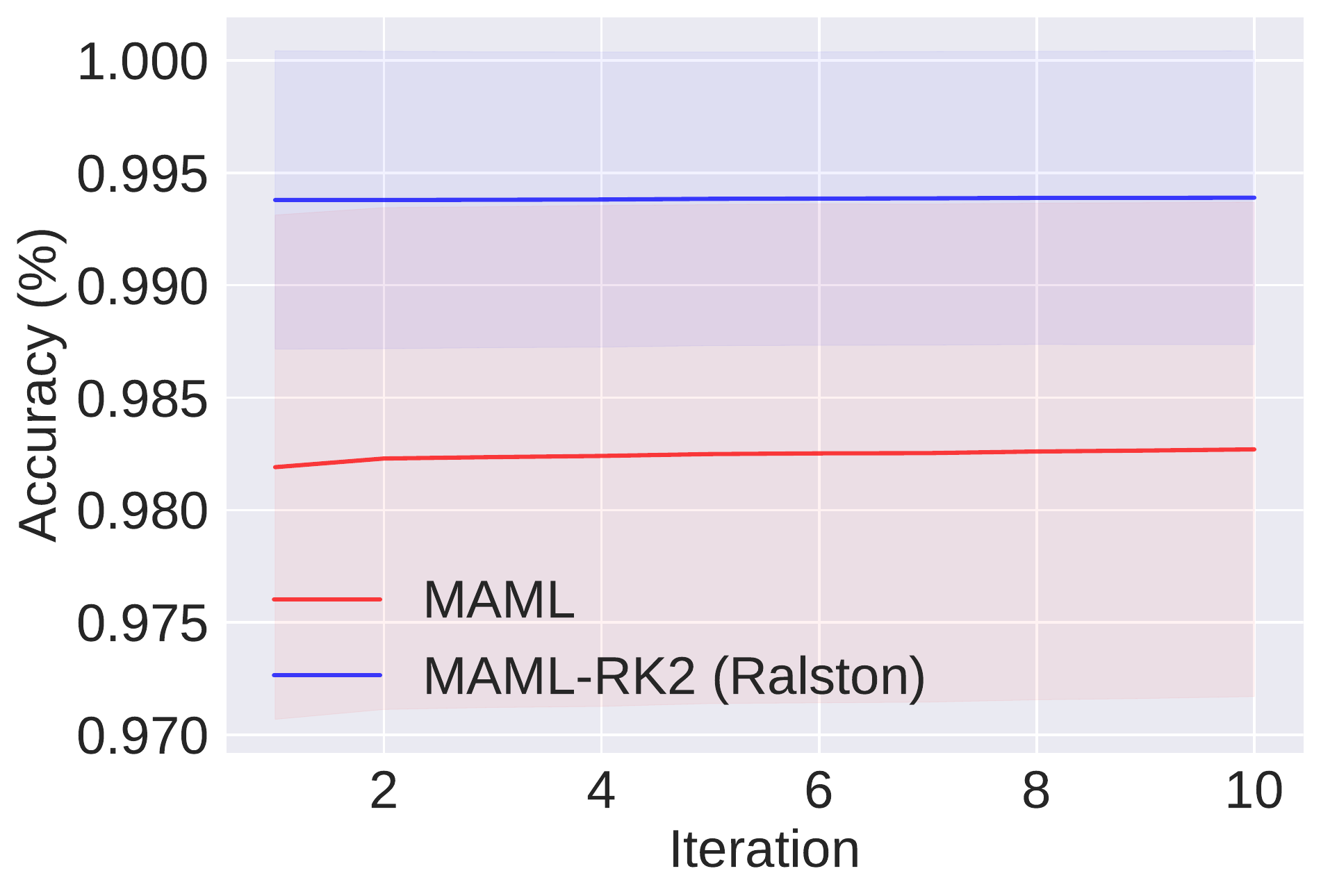} 
        \subcaption{5way-1shot}
    \end{minipage}
    \begin{minipage}{0.24\textwidth}
        \includegraphics[width=\linewidth]{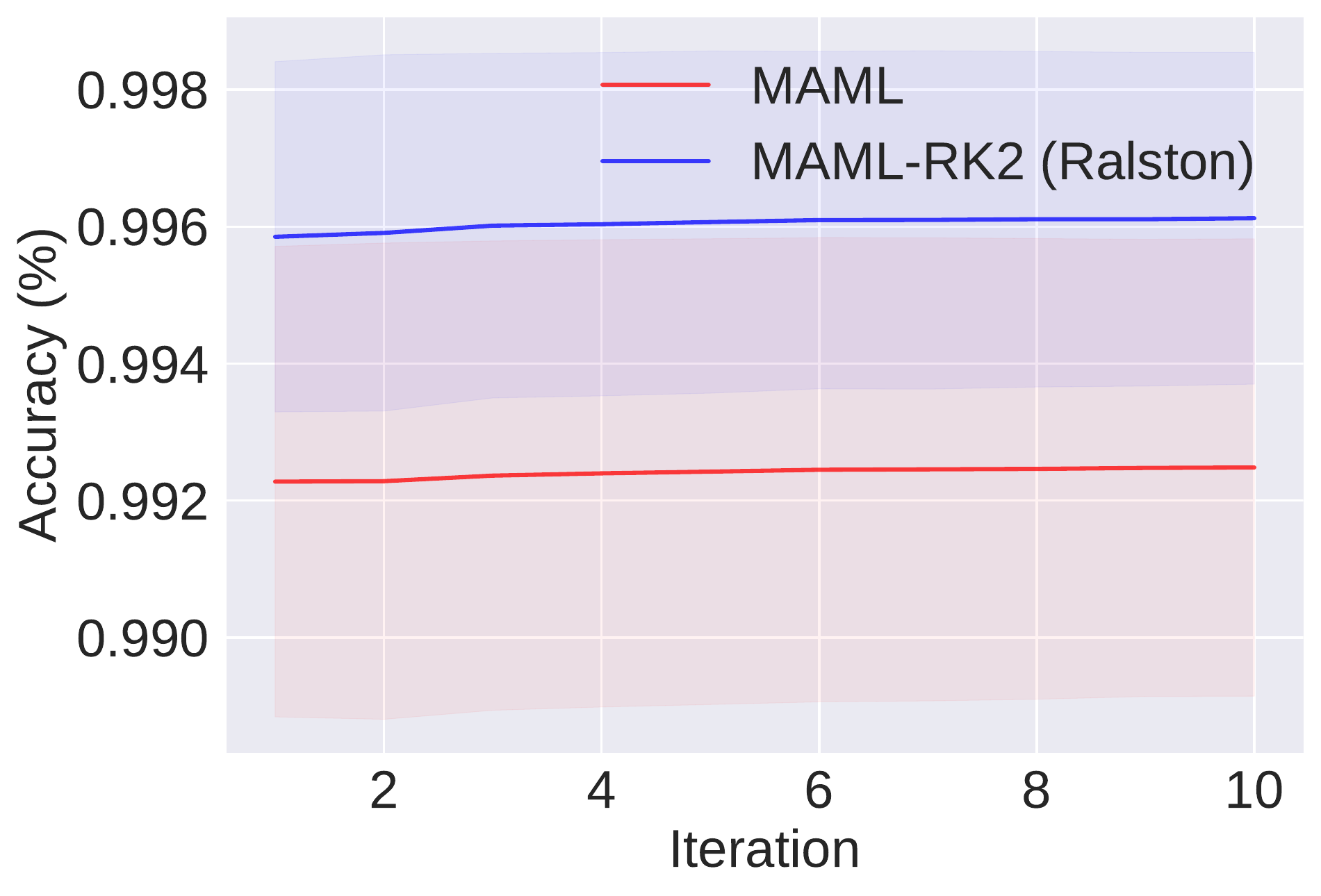} 
        \subcaption{5way-5shot}    
    \end{minipage}
    \begin{minipage}{0.24\textwidth}
        \includegraphics[width=\linewidth]{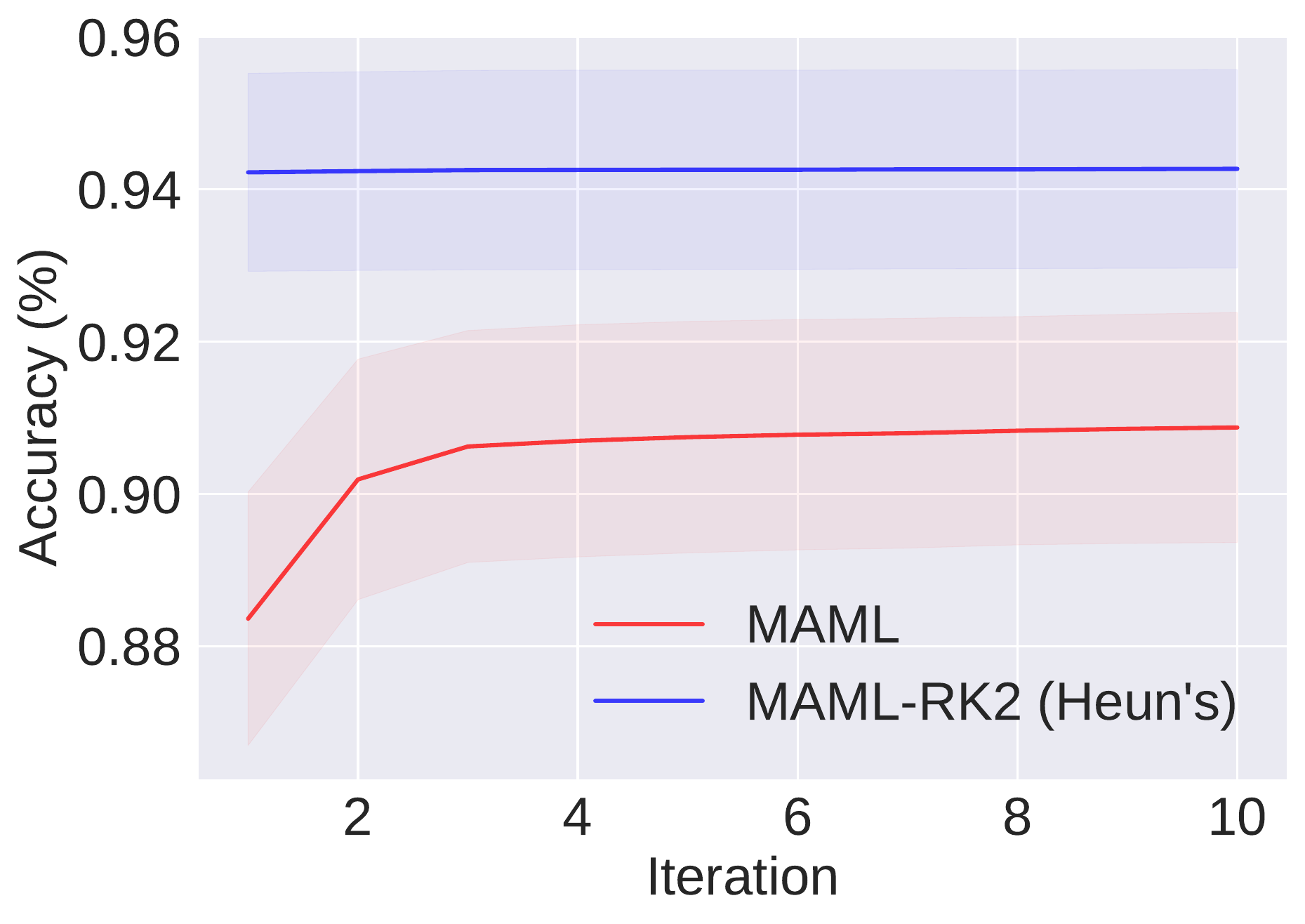} 
        \subcaption{20way-1shot}
    \end{minipage}
    \begin{minipage}{0.24\textwidth}
        \includegraphics[width=\linewidth]{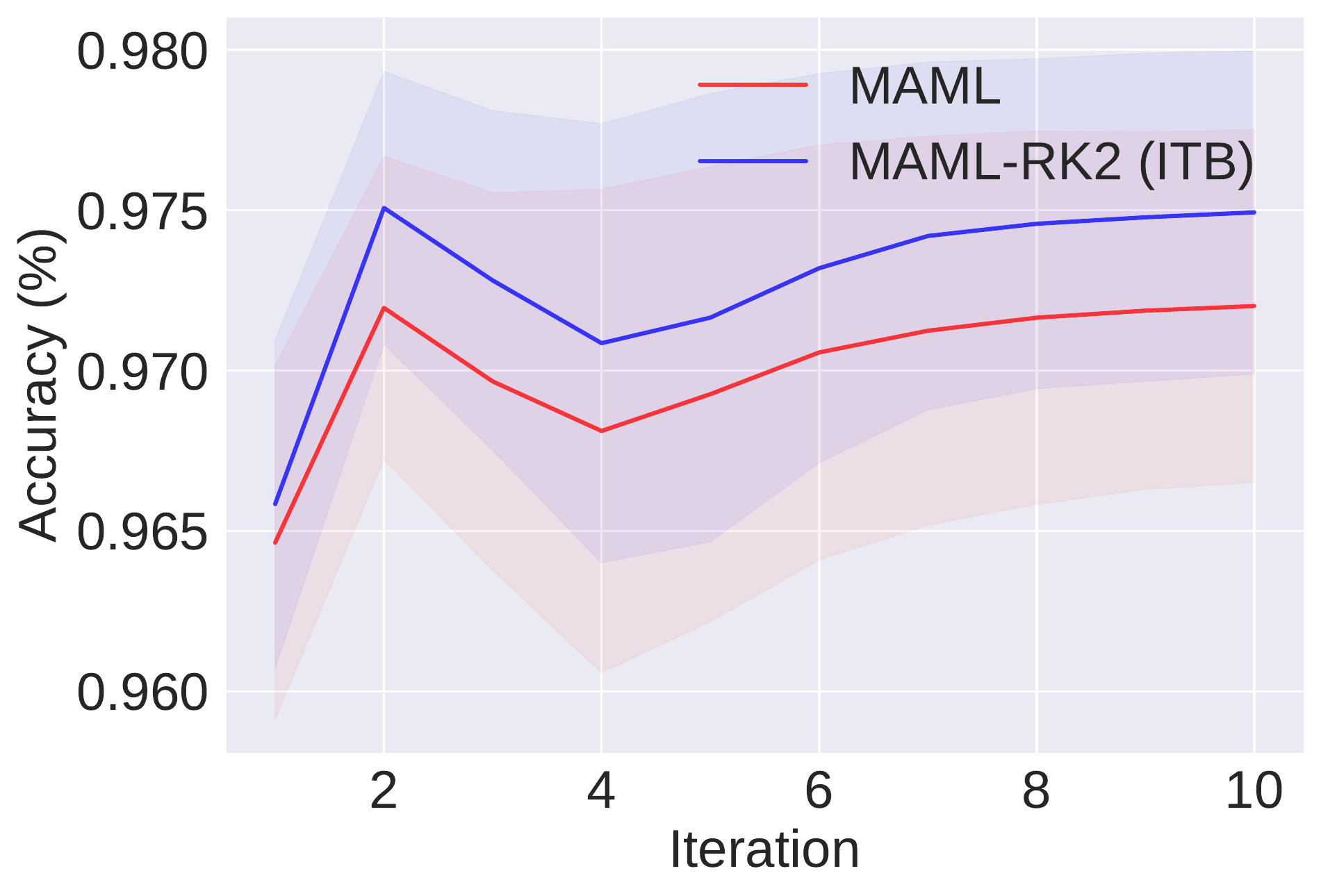} 
        \subcaption{20way-5shot}    
    \end{minipage}
    \caption{Fast-adaptation of MAML-RK on Ominiglot test datsets.}
    \label{fig:perm_iter}
\end{figure*}
\begin{table*}[t]
  \vspace{-0.25cm}
  \centering
  \caption{The performance of MAML-RK for Omniglot and MiniImagenet image classification tasks on 
    1- and 5-shot adaptation problems. The midpoint method corresponds to MAML.}
  \label{tab:omniglot_results}
  \begin{tabular}{|c|c|c|c|c|c|}\hline
      \multicolumn{2}{|c|}{\multirow{2}{*}{Omniglot}}   & \multicolumn{2}{c|}{5-way}  & \multicolumn{2}{c|}{20-way}  \\\cline{3-6}
      \multicolumn{2}{|c|}{}                    & 1-shot & 5-shot       & 1-shot                 & 5-shot \\\hline
      \hline & Midpoint & 98.26 $\pm$ 1.09      & 99.24 $\pm$ 0.33      & 90.87 $\pm$ 1.51       & 97.20 $\pm$ 0.55 \\\hline
      \parbox[t]{2mm}{\multirow{3}{*}{\rotatebox[origin=c]{90}{ours}}} 
                       & Heun's   & 99.24 $\pm$ 0.73      & 99.09 $\pm$ 0.66 & 94.27 $\pm$ 1.30       & {\bf 97.48 $\pm$ 0.53} \\
                       & Ralston  & \bf{99.39 $\pm$ 0.65} & {\bf 99.61 $\pm$ 0.24}       & {\bf 94.91 $\pm$ 1.19} & 97.45 $\pm$ 0.54\\
                       & ITB      & 98.93 $\pm$ 0.89      & 99.12 $\pm$ 0.67       & 93.81 $\pm$ 1.33       & 97.49 $\pm$ 0.50 \\\hline
  \end{tabular}
  \centering
  \begin{tabular}{|c|c|c|c|}\hline
      \multicolumn{2}{|c|}{\multirow{2}{*}{MiniImagenet}} & \multicolumn{2}{c|}{5-way} \\\cline{3-4}
      \multicolumn{2}{|c|}{}    & 1-shot & 5-shot  \\\hline\hline
                                & Midpoint  & 45.21 $\pm$ 5.20 & 59.92 $\pm$ 5.17 \\\hline
      \parbox[t]{2mm}{\multirow{3}{*}{\rotatebox[origin=c]{90}{ours}}} 
                                & Heun's    & {\bf 46.65 $\pm$ 5.10} & {\bf 60.40 $\pm$ 5.15} \\
                                & Ralston   & 44.66 $\pm$ 5.10 & 59.95 $\pm$ 5.16 \\
                                & ITB       & 45.29 $\pm$ 5.38 & 60.19 $\pm$ 4.85 \\\hline
  \end{tabular}
  \vspace{-0.25cm}
\end{table*}

\section{Experiments}
\label{sec:experiments}

To study the effectiveness of our extended Runge-Kutta MAML framework, we conducted detailed empirical studies of various explicit instantiations of second order MAML-RK methods (as detailed in 
in Section~\ref{sec:maml-rk}) on various classification-, regression- and reinforcement- meta-learning benchmarks. 
Throughout the experiments, we compare the midpoint (i.e.\ the original MAML optimization), Heun's, Ralston, ITB methods.
The data, models, and the optimizer for all our experiments is built upon the original MAML code\footnote{MAML regression and classification code: \texttt{http://github.com/cbfinn/maml}
and MAML reinforcement learning code: \texttt{http://github.com/cbfinn/maml\_rl}.}.

The following standard setup is used in all our experiments. 
All models are trained from the training task dataset and
evaluated on the test task dataset. For each task, we have a support set of $K$ examples, which are used
for fast-adaption updates.  During the evaluation phase, the model is initialized with the learned meta-
parameters from training phase, and is fine-tuned on the $K$ samples from the test tasks. The model architecture
for each experiment can be found in the Appendix.

\textbf{Regression} - Following the experiments in MAML \citep{Finn2017}, we consider the sinusoid regression problem. For each task, the 
1-dimensional sinusoid wave of amplitude and phase are varied between $[0.1, 5.0]$ and $[0,\pi]$, and
the goal is to regress on an unseen sinusoid wave. The datapoints are sampled from the range of $[-5.0,5.0]$ and 
we used batch size of ten ($K=10$) for every gradient update with a fixed step size of $0.01$. 
The mean-square-error is used as a loss function. 
Table~\ref{tab:regression_results} presents the regression performance.
The MAML-RK1 (the first-order method) corresponds to the pre-trained model on all training tasks, which is the
baseline method. Note that MAML-RK2 (midpoint) corresponds to original MAML method.
We observe that MAML-RK2 (Ralston) performs marginally better than the midpoint method for this simple task. 

\textbf{Classification} - Next, we evaluate MAML-RK on classification tasks. It has been shown that MAML \citep{Finn2017} achieved state-of-the-art performance when compared to prior meta-learning and few-shot learning algorithms \citep{Koch2016, 
Vinyals2016, Ravi2017} on few-shot Omniglot \citep{Lake2011} and MiniImagenet \citep{Ravi2017} image 
 recognition tasks. Therefore, we only report MAML's results against other extended MAML-RK2 methods.
The standard setup for few-shot classification is that we consider $5$ and $20$ 
classes ($N$-way) with 1-shot and 5-shot learning, and evaluate on new $5$ and $20$
classes. 

 The Omniglot images were downsampled to $28\times 28$ and were  augmented with up to 90
 degrees of rotation. The training task classes were randomly selected from 1200 out of 1623 
characters and rest were used as test task classes.
We followed the same model architectures from previous studies \citep{Vinyals2016, Finn2017}.
For MiniImage classification, the dataset consists of
84$\times$84 60,000 colored images. There are 100 different classes where each class
consists of 600 images. Out of the 100 classes, the training, validation, and test 
classes were split as 64, 12, and 24 respectively. 

Table~\ref{tab:omniglot_results} presents the performance on Omniglot dataset for
5-way and 20-way classification for 1-shot and 5-shot learning.
For 5-way 1-shot and 20-way 1-shot learning, we observe that Ralston method performs 
the best, and followed by Heun's and ITB. The midpoint method performs the worst. 
We suspect that this is because only the midpoint method has zero coefficient for $a_1$, 
which means that it cannot take the first-order gradient 
information $\nabla \mathcal{L}(\theta)$ into account. Hence, setting the coefficient $a_1$ to be greater 
than $0$ is important for achieving better results on this dataset.
Again for 5-way 5-shot learning, we observe that Ralston method performs the best and
the midpoint method performs the second best. 
Figure~\ref{fig:perm_iter} (a-c) illustrates that Ralston outperforms MAML
throughout the training process. 
For 20-way 5-shot learning, we found that all MAML-RK2 models perform
more or less the same, where the range of mean plus and minus the standard deviation
overlaps. 
Heun's method performed the best for the MiniImagenet dataset.

\begin{figure*}[t]
    \centering
    \begin{minipage}{0.244\linewidth}
        \includegraphics[width=\linewidth]{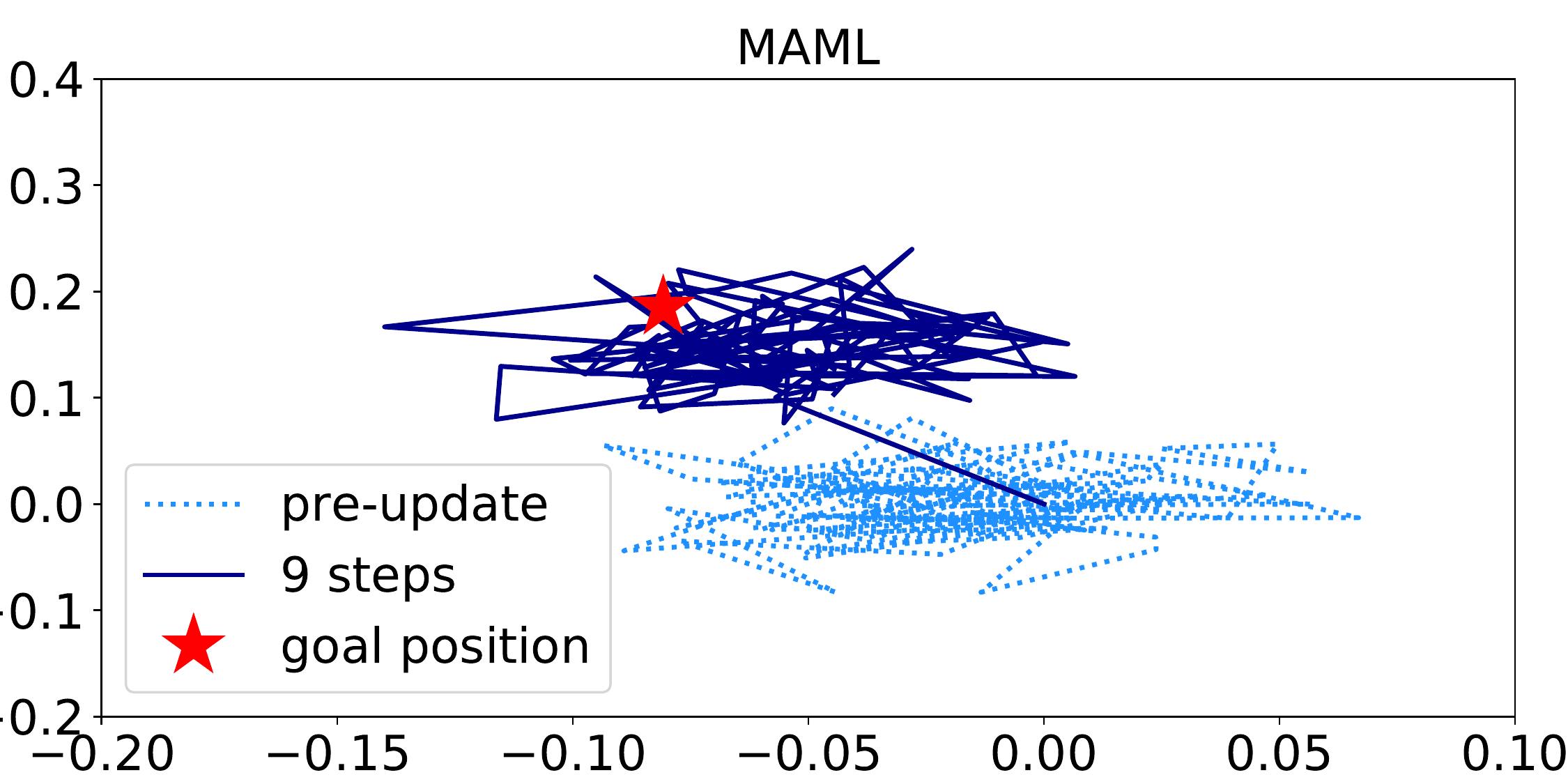} 
        \subcaption{Midpoint (MAML)}
    \end{minipage}
    \begin{minipage}{0.244\linewidth}
        \includegraphics[width=\linewidth]{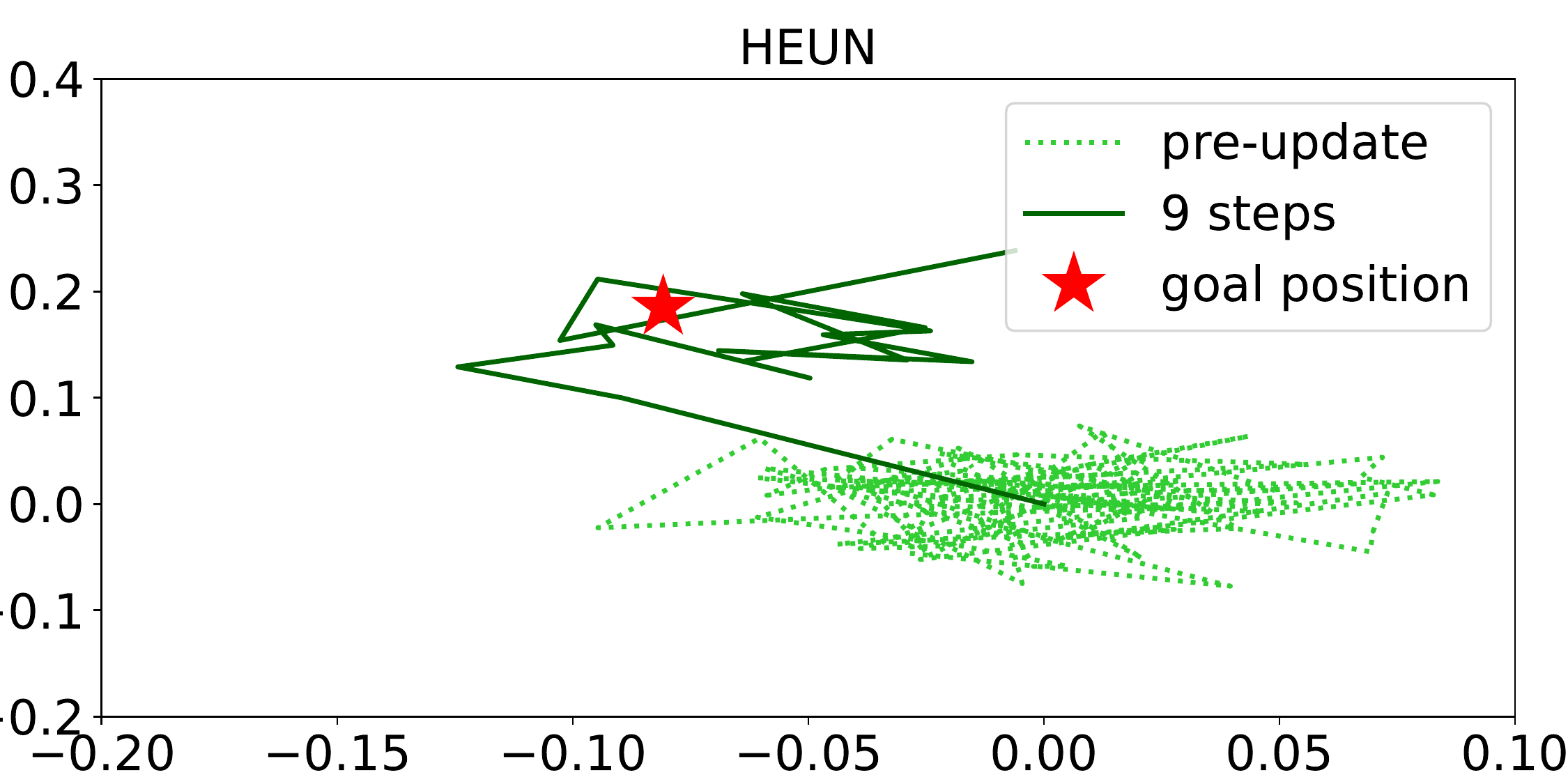} 
        \subcaption{Heun's}    
    \end{minipage}
    \begin{minipage}{0.244\linewidth}
        \includegraphics[width=\linewidth]{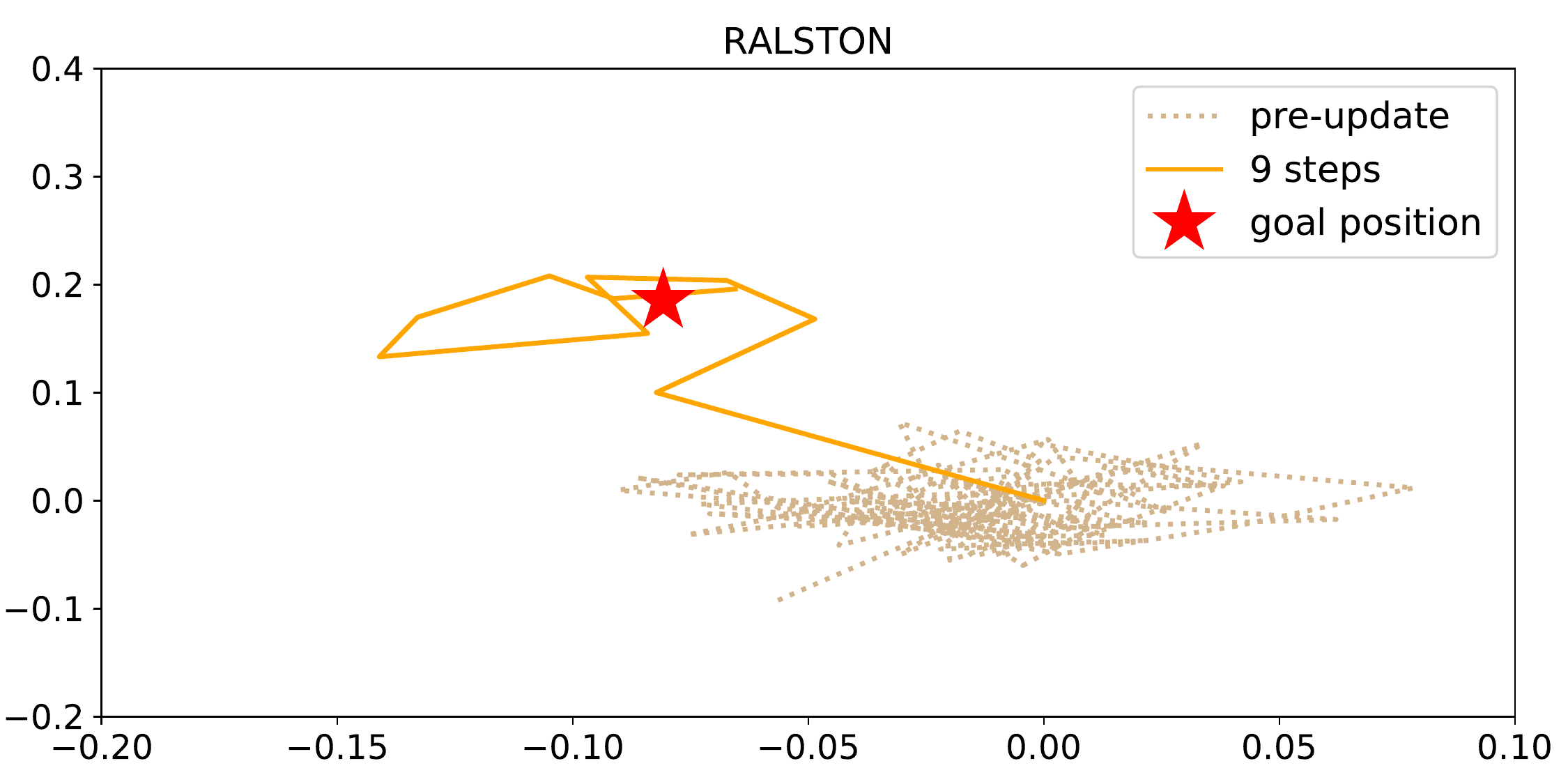} 
        \subcaption{Ralston}
    \end{minipage}
    \begin{minipage}{0.244\linewidth}
        \includegraphics[width=\linewidth]{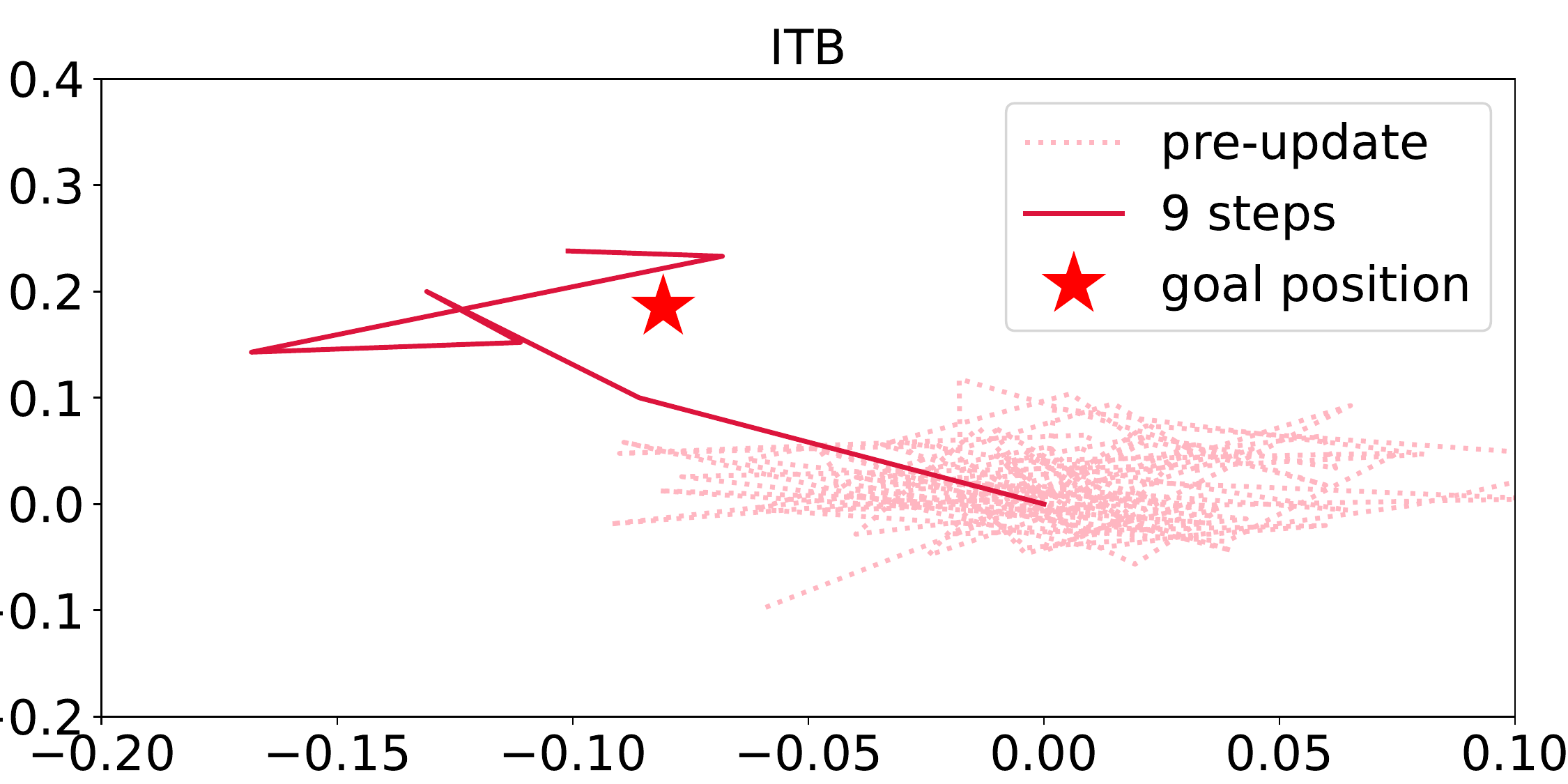} 
        \subcaption{ITB}    
    \end{minipage}
    \vspace{-0.15cm}
    \caption{Illustration of fine-tuning using MAML-RK2 versus MAML on 2D Navigation task}
    \label{fig:point_traj}
\end{figure*}

\begin{wrapfigure}{r}{0.5\textwidth}
    \centering
    \includegraphics[width=\linewidth]{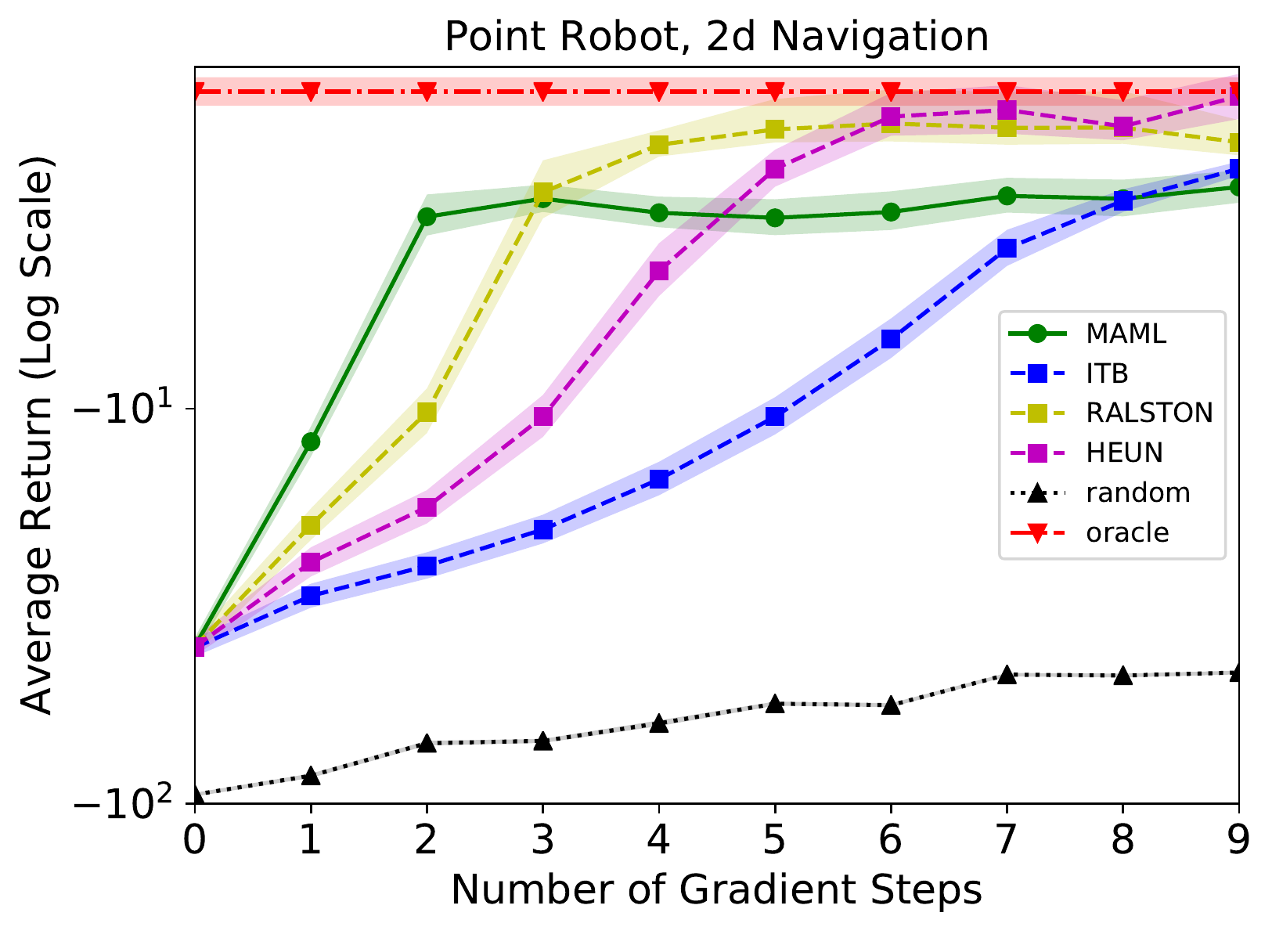} 
    \caption{The 2D navigation performance of MAML-RK2 over different number of gradient steps.}
    \label{fig:point_perf}
    \vspace{-0.4cm}
\end{wrapfigure}

\textbf{Reinforcement Learning} - We evaluate MAML-RK2 on two types of reinforcement
learning environments, 2D navigation and locomotion. For training, REINFORCE was used
for policy  \citep{Williams1992} and trust-region policy optimization (TRPO) was used for meta-optimization \citep{Schulman2015}.
For 2D navigation, there are a set of tasks where a point agent must 
reach to different goal locations in 2D space. The state, the action, and the reward 
corresponds to the 2D location, the motion velocity, and the distance to the goal 
respectively. The simulation terminates when an agent navigates within 0.01 distance 
from the goal.  

Figure~\ref{fig:point_perf} presents the performance of MAML-RK2 for up to nine
gradient updates with 40 samples. A model trained with random initialization
(black triangle line) performs poorly even with nine gradient updates. 
The red curve corresponds to the performance of model with oracle policy 
that receives the goal position as input.
Recall that $a_2$ corresponds to fast adaptation coefficient and $a_1$ corresponds to
good feature representation (because the higher the coefficient, the more it focuses on
$\nabla \mathcal{L}(\theta^\prime)$ and less on $\nabla \mathcal{L}(\theta)$). 
Interestingly, according to the plots in Figure \ref{fig:point_perf} the learning speed is ordered as follows: the midpoint, Ralston,
Heun's, and ITB methods.  This ordering corresponds to the ordering of coefficient of $a_2$,
which is $1$, $\frac{2}{3}$, $\frac{1}{2}$, and $\frac{1}{3}$ respectively.
The midpoint method suffers from poor performance the most. We suspect that
this is because it only emphasizes $\nabla \mathcal{L}(\theta^\prime)$ \change{(fast-adaptation)} part and
ignores $\nabla \mathcal{L}(\theta)$ \change{(shared-representation)} part. 
Ralston method seems to achieve the right balance between the two terms as the performance reaches to 
the oracle (red level) the fastest.

Lastly, Figure~\ref{fig:point_traj} illustrates the actual trajectory of learning towards
the final location. The results shows that the midpoint method takes long time to find the goal location and jitters a lot (showing suboptimal temporal dynamics). On the other hand, ITB finds the goal location
within very few steps. Although Ralston and Heun's are in between the midpoint and ITB,
both methods still take much fewer steps to converge to the goal. 
This clearly demonstrates the role between optimizing under 
 $\nabla \mathcal{L}(\theta^\prime)$ versus $\nabla \mathcal{L}(\theta)$.

\section{Discussion and future work}

In this paper, we extend the fast-adaptation stage (the inner loop) 
to higher-order Runge-Kutta methods in MAML to gain a finer control over the optimization, and show that original
fast-adaptation update corresponds to the second-order midpoint method.
The refined RK optimization helped us control various important aspects of the meta-learning process (fast adaptation and shared representation) 
%
%
%
achieving improved performance on 
regression, classification, and reinforcement learning tasks.

It is worth noting that our proposed generalization is not specific to MAML, but can also be applied to other
meta-models. We share some potential directions of future work.

{\bf Exploring other ODE integrators} -
We applied explicit Runge-Kutta ODE integrator to generalize 
stochastic gradient optimization of MAML. One can also explore 
other variations on gradient-based updates such as AdaGrad and ADAM \citep{Duchi2011, Kingma2015} and its effects to the meta-learning models. (Similar types of analysis has been done for image classification via neural networks, \citealp{Im2017loss}.)
%

Beside RK integrators, one can also apply other integrators, such as exponential- and leapfrog integrators. Since different integrators focus on different aspects of the optimization, one expects that they would benefit on different types of tasks.
We believe that
a thorough analysis of this would be an interesting direction to explore in the future and would be extremely beneficial to the practitioner.

{\bf Extension to ANIL} - 
\citet{Raghu2019} recently showed 
%
%
that feature reuse is the dominant factor in MAML optimization and propose 
%
to only train 
adaption updates on the last layer (i.e.\ the softmax layer for classification) of the 
model. 
It would be instructive to study how different RK methods effects the various layers of the MAML network.
For example, we can use the midpoint method for the last layer and apply Ralston, Heuns, ITB, and the gradient 
for the lower layers. This essentially has the effect of shifting the balance from $a_2$ to $a_1$ as we move down to the lower layers.

{\bf Extension to Bayesian MAML} - 
There are several works on Bayesian MAML \citep{Kim2018, Finn2018} that help in adding robustness and preventing overfitting to few shot learning. It would be interesting to combine MAML-RK optimization with these frameworks.

\clearpage
\bibliography{main}
\bibliographystyle{plain}
\clearpage
\appendix

\section{Experimental Details} 

Here, we include some of the experimental set-up details. 
Note that we worked based on original MAML code base. This makes the
the experimental set-up the same as the original paper.

\subsection{Regression}
We used multilayer perceptrons with two ReLU hidden layers of size 40.
We trained each model fore 100,000 iterations. Every gradient updates are using 10 batch samples.
The learning rate (step size) is set to 0.01.

\subsection{Classification}
\textbf{Omniglot dataset} 
Following the same model architectures from previous studies \citep{Vinyals2016, Finn2017}, 
we used four convolutonal blocks with 64 $3\times 3$ convolution filters, followed by 
batch normalization and  ReLU activations, and $2 \times 2$ max pooling. The last hidden
layer is 64.

We ran 30,000 epoch for 5-way 1-shot and 5-way 5-shot set-ups for all models.
We ran 50,000 epoch for 20-way 1-shot set-up for all models.
We ran 90,000 epoch for 20-way 5-shot set-up for all models.

Because there are total of $NK$ examples, where $N$ is number of tasks and $K$ is number of examples per task,
we were able to use 32 batch sizes for each gradient updates for meta-parameters. 
The network was evaluated using 3 gradient updates with step size 0.4 for 5-way set-up and 5 gradient updates with
step size 0.1 for 20-way set-up.

\textbf{MiniImagenet dataset} 
The same convolutional blocks are used as Omniglot dataset except that the number of filters were reduce to 32.
The batch size was set to 4 and 2 for meta-parameter updates. 
All models were trained for 90,000 epoch and 5 gradient updates with 0.01 learning rate during the training.
At test time, 10 gradient updates were used for 15 examples per class.

\subsection{Reinforcement learning (2D navigation) }
For all meta algorithms presented in this paper, the parameters are set to be the same. In the 2D navigation, like the original MAML paper, the model consists of a 2-layer MLP each with 100 units and it uses ReLU for activation. The fast adaptation learning is rate is 0.16 while the meta learning rate is 0.01. We trained it with 100 iterations and meta batch size of 40.

\end{document}